\documentclass[
    twocolumn,
    aps,
    prb,
    superscriptaddress,
    showkeys,
    amsmath,
    amssymb
]{revtex4-2}

\usepackage{graphicx}
\usepackage{hyperref}
\usepackage{xcolor}
\usepackage{bm}
\usepackage{mathtools}
\usepackage{amsthm}

\newtheorem{theorem}{Theorem}
\newcommand{\kk}{\mathbf{k}}
\newcommand{\rr}{\mathbf{r}}
\newcommand{\St}{\tilde{\Sigma}}
\newcommand{\Pdata}{P_{\text{data}}}
\hypersetup{colorlinks=true,linkcolor=blue,urlcolor=blue,citecolor=blue}

\begin{document}

\title{Flicker-DDPM: Accelerating Denoising Diffusion via $1/f$ Colored Noise Injection}

\author{KeXiang Mao}
\email{2022302021011@whu.edu.cn}
\affiliation{School of Physics and Technology, Wuhan University, Wuhan 430072, China}
\affiliation{Hongyi Honor College, Wuhan University, Wuhan 430072, China}
\author{FanCheng Li}
\email{2022300002075@whu.edu.cn}
\affiliation{School of Physics and Technology, Wuhan University, Wuhan 430072, China}
\affiliation{Hongyi Honor College, Wuhan University, Wuhan 430072, China}

\date{\today}

\begin{abstract}
We propose a novel diffusion model, \textbf{Flicker-DDPM}, which incorporates flicker ($1/f$) noise inspired by self-organized criticality (SOC), a widely observed phenomenon in natural systems. Unlike denoising diffusion probabilistic models (DDPMs), which employ isotropic white noise in the forward process, \textbf{Flicker-DDPM} adopts colored noise with power-law spectra to better match the spectral statistics of natural images, whose power spectra typically follow $P(k)\propto k^{-\alpha}$. To this end, we develop a colored-noise module based on a spatial correlation kernel, $\sigma(d)=(d+1)^{-\eta}$, and theoretically establish that adjusting $\eta$ controls the spectral exponent $\alpha$ of the generated $1/f^{\alpha}$ noise, enabling adaptation to datasets with diverse spectral characteristics. On CIFAR-10, \textbf{Flicker-DDPM} matches or surpasses the generation quality of a standard DDPM baseline using $3.33\times$ fewer sampling steps, with negligible additional computational cost per step. We further develop a frequency-domain linear theory demonstrating that spectrally matched colored noise linearizes the reverse trajectory, theoretically explaining the observed sampling acceleration.
\end{abstract}

\keywords{diffusion models, $1/f$ noise, self-organized criticality, colored noise, generative models}

\maketitle

\section{Introduction}

Flicker noise---random fluctuations whose power spectral density scales as $S(f) \propto 1/f^\alpha$---is among the most ubiquitous phenomena in nature~\cite{milotti20021}. First identified in vacuum tubes in the 1920s, it has since been observed across an extraordinary range of systems: semiconductor devices~\cite{hooge1981experimental}, X-ray luminosity variations from accreting black holes~\cite{uttley2002measuring}, conformational dynamics of proteins~\cite{dill2008protein,masuki2025generative}, long-range correlations in DNA sequences~\cite{peng1992longrange}, pitch and loudness fluctuations in music and speech~\cite{voss1975nature}, and the statistics of natural images~\cite{ruderman1994statistics, torralba2003statistics, ruderman1993statistics,van1996modelling,saremi2013hierarchical}. This universality is intimately connected to self-organized criticality (SOC)~\cite{bak1987self,bak1988self}, which posits that many complex systems naturally evolve toward critical states characterized by scale-free correlations and power-law relaxation spectra.

Denoising diffusion probabilistic models (DDPM)~\cite{ho2020denoising, sohl2015deep} have achieved state-of-the-art generative performance in image synthesis~\cite{dhariwal2021diffusion}, molecular design~\cite{hoogeboom2022equivariant}, audio generation~\cite{kong2020diffwave}, and protein structure prediction~\cite{watson2023novo,abramson2024accurate}. The standard framework defines a forward process $x_t = \sqrt{\bar\alpha_t}\, x_0 + \sqrt{1-\bar\alpha_t}\,\epsilon$ with $\epsilon \sim \mathcal{N}(0, \mathbf{I})$, progressively corrupting data into isotropic white Gaussian noise. A neural network then learns to reverse this process, generating samples from noise.

A fundamental inefficiency arises from the spectral mismatch between white noise and structured data. Natural images have power spectra $\Pdata(k) \propto k^{-\alpha}$ with $\alpha \approx 2$--$3$~\cite{torralba2003statistics}, concentrating energy at low spatial frequencies. Similar power-law statistics appear in protein distance maps, audio spectrograms, and astrophysical time series. White noise, with its flat spectrum, forces the reverse process to simultaneously reshape the spectral profile \emph{and} generate fine-grained structure---two tasks operating on incompatible time scales. This difficulty is compounded by the spectral bias of neural networks~\cite{rahaman2019spectral}, which preferentially learn low-frequency functions, potentially further degrading high-frequency generation quality.

This issue is closely related to the noise schedule flaw identified by Lin et al.~\cite{lin2024common}, where non-zero terminal SNR causes residual low-frequency leakage during training but absence during inference---a problem also analyzed from the variational perspective~\cite{kingma2021variational}. Recent Fourier-space analysis~\cite{gao2025fourier} has independently confirmed that this frequency hierarchy---where high-frequency components are corrupted faster and suffer larger approximation errors---degrades generation quality. Frequency-aware noise control~\cite{jiralerspong2025shaping}, spectral regularization~\cite{chandran2026spectral}, and non-isotropic forward processes such as inverse heat dissipation~\cite{rissanen2022generative} have been proposed as partial remedies, though none provides an analytic prescription linking noise structure to data statistics.

In this paper, we show that the multi-scale relaxation dynamics within trained DDPM---where the score-function response spans three orders of magnitude across frequencies---mirrors the broad relaxation spectra characteristic of SOC systems. Rather than treat this as an inherent property, we identify it as a consequence of the spectral mismatch and propose a direct solution: inject $1/f$-type colored noise whose spectrum matches the data. We construct a colored noise module with a single tunable parameter $\eta$ whose optimal value is analytically determined from data statistics via Mat\'ern covariance theory, and integrate it into DDPM to create \emph{Flicker-DDPM}. The result is a $3.33\times$ sampling speedup with quality improvement, at negligible computational overhead.

\section{Method}

Our approach proceeds in three steps: (i)~we establish the general relationship between power-law spatial correlations and power-law spectra, showing that the desired $1/f^\alpha$ noise can be generated from a simple correlation kernel; (ii)~we derive the kernel parameter $\eta$ analytically from the data spectrum via Mat\'ern covariance theory; and (iii)~we integrate the resulting colored noise into the DDPM forward and reverse processes with no architectural changes.

\subsection{Power-law spectra from power-law correlations}

The connection between spatial correlations and spectral structure relies on a key statistical property of natural images: approximate translational and rotational invariance. Ensemble-averaged over natural scenes, the two-point correlation function depends only on the separation distance $r = |\rr_1 - \rr_2|$, not on absolute position or orientation~\cite{ruderman1994statistics}. For such statistically homogeneous and isotropic fields, the Wiener--Khinchin theorem reduces to a radial Hankel transform:
\begin{equation}
    S(k) = 2\pi \int_0^\infty C(r)\, J_0(kr)\, r\, dr.
    \label{eq:wk}
\end{equation}

Now suppose the correlation is a power law, $C(r) \propto r^{-\eta}$. Under rescaling $r \to \lambda r$, this function transforms homogeneously: $C(\lambda r) = \lambda^{-\eta} C(r)$. The Hankel transform maps homogeneous functions to homogeneous functions---the integration measure $r\,dr$ contributes dimension 2, and the oscillatory kernel $J_0(kr)$ enforces $k \to k/\lambda$---so the spectrum is necessarily a power law $S(k) \propto k^{-\alpha}$, with $\alpha$ determined by $\eta$ and the spatial dimension $d=2$. Power-law correlations produce power-law spectra, and vice versa.

This duality motivates our approach: to generate noise with the $1/f^\alpha$ spectrum observed in natural images, it suffices to impose a power-law spatial correlation kernel. The spectrum inherits the power-law form automatically from the symmetry. The precise quantitative mapping $\eta(\alpha)$---accounting for the finite correlation range of real data---is derived in Sec.~II\,C via Mat\'ern covariance theory.

\subsection{Colored noise construction}

We define a spatial correlation kernel between pixel positions $\rr_1$ and $\rr_2$:
\begin{equation}
    C(d) = (d + 1)^{-\eta}, \quad d = |\rr_1 - \rr_2|_1,
    \label{eq:kernel}
\end{equation}
where $d$ is the Manhattan distance and $\eta > 0$ controls the correlation range. This power-law kernel generates noise with power spectrum $\St(\kk) \propto |\kk|^{-(2-\eta)}$ in the continuum limit, approximating $1/f^\alpha$-type spectral structure. Figure~\ref{fig:noise} illustrates the resulting colored noise: unlike white noise, it exhibits spatially correlated ``patch-like'' structure with enhanced low-frequency content, as confirmed by its radial power spectrum.

The covariance matrix $\Sigma_{ij} = C(d_{ij})$ is strictly positive-definite for all $\eta > 0$ on a finite lattice. This follows from the complete monotonicity of $f(d) = (d+1)^{-\eta}$: it can be written as a Laplace transform $f(d) = \int_0^\infty e^{-sd}\, d\mu(s)$ with $d\mu(s) = e^{-s} s^{\eta-1}/\Gamma(\eta)\, ds > 0$, ensuring that $\Sigma$ is a positive mixture of positive-definite exponential kernels~\cite{berg1984harmonic}. Positive-definiteness guarantees both the existence of the Cholesky factorization and the well-definedness of the multivariate Gaussian $\mathcal{N}(0, \Sigma)$.

Two equivalent implementations transform white noise $\xi_w$ into colored noise $\xi_c$:

\emph{Cholesky.}---The factorization $\Sigma = LL^\top$ yields $\xi_c = L\xi_w$ with $\mathbb{E}[\xi_c \xi_c^\top] = L\,\mathbb{E}[\xi_w \xi_w^\top]\,L^\top = \Sigma$. The inverse $L^{-1}$ provides whitening. This Cholesky-based construction has also been employed to generate blue (high-frequency-enhanced) noise for diffusion models~\cite{huang2024blue}. Cost: $\mathcal{O}(N^4)$ (precomputed once).

\emph{FFT.}---Embedding $C$ in a $2N \times 2N$ circulant matrix and diagonalizing via FFT gives $\tilde\xi_c(\kk) = \sqrt{\lambda(\kk)}\,\tilde\xi_w(\kk)$. Cost: $\mathcal{O}(N^2 \log N)$ per sample, enabling scaling to high resolutions.

\begin{figure}[h!]
    \centering
    \includegraphics[width=1.05\columnwidth]{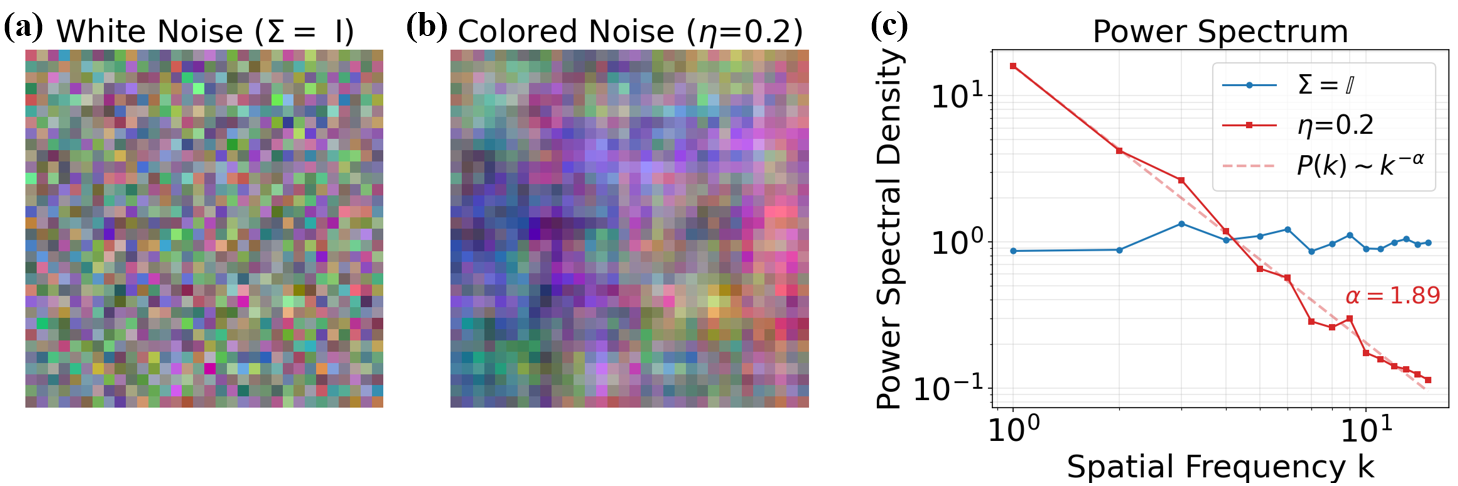}
    \caption{Noise samples on a $32\times 32$ lattice. (a)~White noise ($\Sigma = \mathbf{I}$): uncorrelated pixels with flat power spectrum. (b)~Colored noise ($\eta = 0.2$): spatially correlated structure from the power-law kernel~\eqref{eq:kernel}. (c)~Radial power spectra: white noise is flat, while colored noise exhibits $P(k) \propto k^{-\alpha}$ consistent with natural image statistics.}
    \label{fig:noise}
\end{figure}

\subsection{Analytic determination of $\eta$ from data}

The exponent $\eta$ is not a free parameter but is fixed by data statistics. We compute the azimuthally averaged power spectrum $\Pdata(k)$ over all 50\,000 CIFAR-10 training images, averaged over the three color channels (which share the same spectral exponent to excellent approximation), and perform linear regression in $\ln k$--$\ln D$ space (i.e., the slope of the log-log plot), finding
\begin{equation}
    \Pdata(k) \propto k^{-\alpha}, \quad \alpha = 2.70 \pm 0.08 \;\; (R^2 = 0.99).
    \label{eq:alpha}
\end{equation}

A naive dimensional analysis predicts $\alpha_\mathrm{noise} = 2 - \eta$ for a power-law kernel in 2D (power-law covariance $r^{-\eta}$ produces spectrum $k^{-(2-\eta)}$). Matching the noise spectrum to data requires $2 - \eta = \alpha$, i.e., $\eta = 2 - \alpha$. For $\alpha > 2$ this yields $\eta < 0$---a manifestly unphysical result. Almost all natural image datasets satisfy $\alpha > 2$, so the naive power-law-to-power-law argument cannot explain their spectral statistics with a positive-definite kernel.

The resolution comes from recognizing that real power spectra must be bounded at $k = 0$ (finite total variance), which excludes a pure power law $k^{-\alpha}$ extending to $k \to 0$. The correct model is the Mat\'ern covariance family~\cite{matern2013spatial,williams2006gaussian}, whose 2D power spectrum $\St(k) \propto (k^2 + \kappa^2)^{-(\nu+1)}$ gives $\alpha = 2(\nu + 1)$ for $k \gg \kappa$ while remaining finite at $k = 0$. The Mat\'ern real-space asymptotic form $C(r) \sim r^{\nu - 1/2}\, e^{-r/\xi}$ has an algebraic envelope $r^{\nu-1/2}$ that dominates on a finite lattice when $\kappa$ is small. Matching this envelope to $(r+1)^{-\eta} \sim r^{-\eta}$ gives $-\eta = \nu - 1/2$, yielding
\begin{equation}
    \alpha + 2\eta = 3.
    \label{eq:eta_formula}
\end{equation}

The leading-order prediction for CIFAR-10 ($\alpha = 2.70$) is $\eta_0 = 0.15$. However, noise--signal competition is strongest in the mid-frequency range $k \approx 3$--$7$, where the local Mat\'ern exponent is $\alpha_\text{eff} \approx 2.6$ (below asymptopia due to finite-$\kappa$ curvature). Using the effective exponent:
\begin{equation}
    \eta_\text{opt} = \frac{3 - \alpha_\text{eff}}{2} = \frac{3 - 2.6}{2} = 0.20,
    \label{eq:eta_opt}
\end{equation}
in exact agreement with the experimentally optimal value.

Equation~\eqref{eq:eta_formula} is universal---for \emph{any} dataset with measurable spectral exponent $\alpha$, the optimal $\eta$ follows immediately without hyperparameter search. Table~\ref{tab:universal} shows predictions for representative data modalities.

\begin{table}[b]
    \caption{Predicted optimal $\eta$ for data with different spectral exponents $\alpha$. The formula $\eta = (3-\alpha)/2$ provides a universal, data-driven prescription.}
    \label{tab:universal}
    \begin{ruledtabular}
    \begin{tabular}{lcc}
        Data type & $\alpha$ & $\eta_\text{opt}$ \\
        \hline
        Texture images     & $\sim 2.0$ & 0.50 \\
        Natural images (CIFAR-10) & 2.70 & 0.15--0.20 \\
        Edge-dominated images (MINIST) & $\sim 3.0$ & 0.00 \\
    \end{tabular}
    \end{ruledtabular}
\end{table}

\subsection{Integration into DDPM}

\begin{figure}[h!]
    \centering
    \includegraphics[width=\columnwidth]{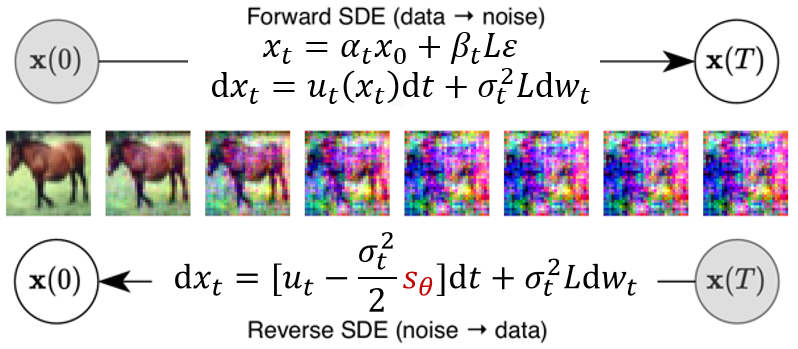}
    \caption{Schematic of Flicker-DDPM. \textbf{Top}: the forward SDE corrupts data $x(0)$ into colored noise $x(T)$ via $x_t = \alpha_t x_0 + \beta_t L\epsilon$, with stochastic dynamics $dx_t = u_t(x_t)\,dt + \sigma_t^2 L\,dw_t$. \textbf{Bottom}: the reverse SDE generates data from colored noise using the learned score $s_\theta$, with $dx_t = [u_t - (\sigma_t^2/2)\,s_\theta]\,dt + \sigma_t^2 L\,dw_t$. The colored noise $L\,dw_t$ replaces the standard white noise $dw_t$ at both stages, while the network architecture remains unchanged.}
    \label{fig:process}
\end{figure}

We now show that replacing white noise with colored noise requires \emph{no changes} to the network architecture and preserves the simplicity of training (Fig.~\ref{fig:process}). We adopt the flow-matching/SDE formulation~\cite{song2020score,lipman2023flow}: the forward process defines a conditional distribution
\begin{equation}
    p_t(x_t | z) = \mathcal{N}\bigl(\alpha_t z,\; \beta_t^2\, \Sigma\bigr),
    \label{eq:forward_colored}
\end{equation}
where $z = x_0$ is a data sample, $\alpha_t$ and $\beta_t$ define the noise schedule, and $\Sigma = LL^\top$ is our colored covariance. The standard DDPM corresponds to $\Sigma = \mathbf{I}$.

\emph{Vector field invariance.}---The sample trajectory $X_t = \alpha_t z + \beta_t L\epsilon$ with $\epsilon \sim \mathcal{N}(0,\mathbf{I})$ yields
\begin{equation}
    \frac{dX_t}{dt} = \dot\alpha_t z + \dot\beta_t L\epsilon = \frac{\dot\beta_t}{\beta_t} X_t + \Bigl(\dot\alpha_t - \frac{\dot\beta_t}{\beta_t}\alpha_t\Bigr) z.
    \label{eq:vector_field}
\end{equation}
This is \emph{identical} to the white-noise case---the velocity field $u_t(x|z)$ depends on $X_t$ and $z$ but not on $\Sigma$. Therefore any model trained to predict the velocity field transfers directly.

\emph{Score function.}---The conditional score acquires a $\Sigma^{-1}$ factor:
\begin{equation}
    s_t(x_t | z) = -\nabla \log p_t(x_t|z) = -\frac{\Sigma^{-1}(x_t - \alpha_t z)}{\beta_t^2}.
    \label{eq:score_colored}
\end{equation}
The velocity--score conversion becomes $u_t = (\dot\alpha_t/\alpha_t - \dot\beta_t/\beta_t)\,\beta_t^2\,\Sigma\, s_t + (\dot\alpha_t/\alpha_t)\, x$, which is data-independent and thus still permits training via simple regression.

\emph{Forward process.}---In $\epsilon$-prediction form:
\begin{equation}
    x_t = \sqrt{\bar\alpha_t}\, x_0 + \sqrt{1-\bar\alpha_t}\, L\epsilon, \quad \epsilon \sim \mathcal{N}(0,\mathbf{I}).
    \label{eq:forward}
\end{equation}

\emph{Training loss.}---A naive colored-noise loss $\|\epsilon_\theta - L\epsilon\|^2$ amplifies low-frequency errors (since $L$ concentrates energy there), leading to training instability. Whitening the residual restores uniform spectral weighting:
\begin{equation}
    \mathcal{L} = \|L^{-1}\epsilon_\theta(x_t, t) - \epsilon\|^2.
    \label{eq:loss}
\end{equation}
This preserves the loss minimum ($\epsilon_\theta^* = L\epsilon$) while ensuring that gradient contributions are balanced across all frequency modes.

\emph{Reverse sampling.}---Stochastic noise at each denoising step uses colored noise:
\begin{equation}
    x_{t-1} = \mu_\theta(x_t, t) + \sigma_t\, L\epsilon.
    \label{eq:reverse}
\end{equation}

The colored noise module adds negligible overhead (one matrix multiply or FFT per step), requires no architectural changes, and is fully compatible with existing acceleration methods (DDIM~\cite{song2020denoising}, progressive distillation~\cite{salimans2022progressive}).

\section{Experiments}

We train on CIFAR-10~\cite{krizhevsky2009learning} ($32 \times 32$ RGB) using a standard UNet~\cite{ronneberger2015unet,nichol2021improved,zou2021ddpm} (128 base channels, multipliers $[1,2,2,2]$, attention at $16{\times}16$, 2 residual blocks, dropout 0.15) with Adam~\cite{kingma2014adam} ($\text{lr} = 10^{-4}$), 200 epochs, linear schedule $\beta_1 = 10^{-4}$ to $\beta_T = 0.028$.

The code and trained model weights that reproduce
the results of this paper are publicly available at \url{https://github.com/Mao-Kexiang/Flicker_DDPM}

\subsection{Generation quality and speedup}

\begin{table}[h!]
    \caption{FID on CIFAR-10 (10k samples). Flicker-DDPM with $\eta = 0.2$ versus standard white-noise DDPM.}
    \label{tab:fid}
    \begin{ruledtabular}
    \begin{tabular}{lccc}
        $T$ & White & Flicker & Improvement \\
        \hline
        100 & 36.17 & 22.57 & $-37.6\%$ \\
        150 & 25.36 & \textbf{12.24} & $-51.7\%$ \\
        200 & 18.08 & \textbf{11.57} & $-36.0\%$ \\
        500 & 13.02 & 11.96 & $-8.1\%$ \\
    \end{tabular}
    \end{ruledtabular}
\end{table}

\begin{figure}[t]
    \centering
    \includegraphics[width=\columnwidth]{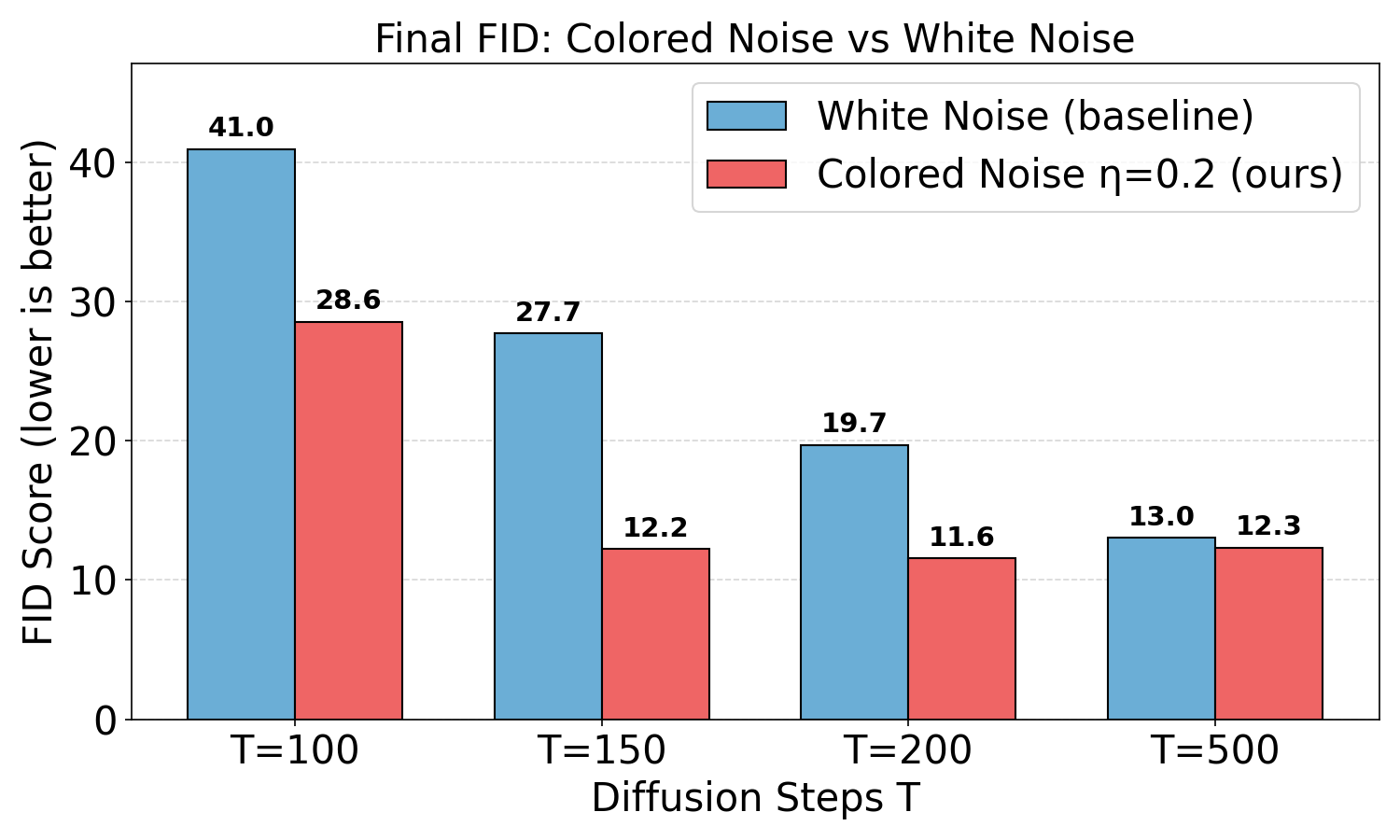}
    \caption{Final FID scores as a function of diffusion steps $T$. Flicker-DDPM (red) consistently outperforms white-noise DDPM (blue) across all step counts. The advantage is most pronounced at low $T$: at $T=150$, Flicker-DDPM achieves FID~12.2, surpassing white DDPM even at $T=500$ (FID~13.0), demonstrating a $3.33\times$ sampling speedup with simultaneous quality improvement.}
    \label{fig:fid_bar}
\end{figure}

Table~\ref{tab:fid} and Fig.~\ref{fig:fid_bar} present our central result: Flicker-DDPM at $T=150$ (FID~12.24~\cite{heusel2017gans}) outperforms standard DDPM at $T=500$ (FID~13.02), yielding a speedup of
\begin{equation}
    T_\text{white}/T_\text{flicker} = 500/150 \approx 3.33\times
    \label{eq:speedup}
\end{equation}
with simultaneous quality improvement. The best overall FID (11.57) is achieved at $T=200$.

\begin{figure}[h!]
    \centering
    \includegraphics[width=\columnwidth]{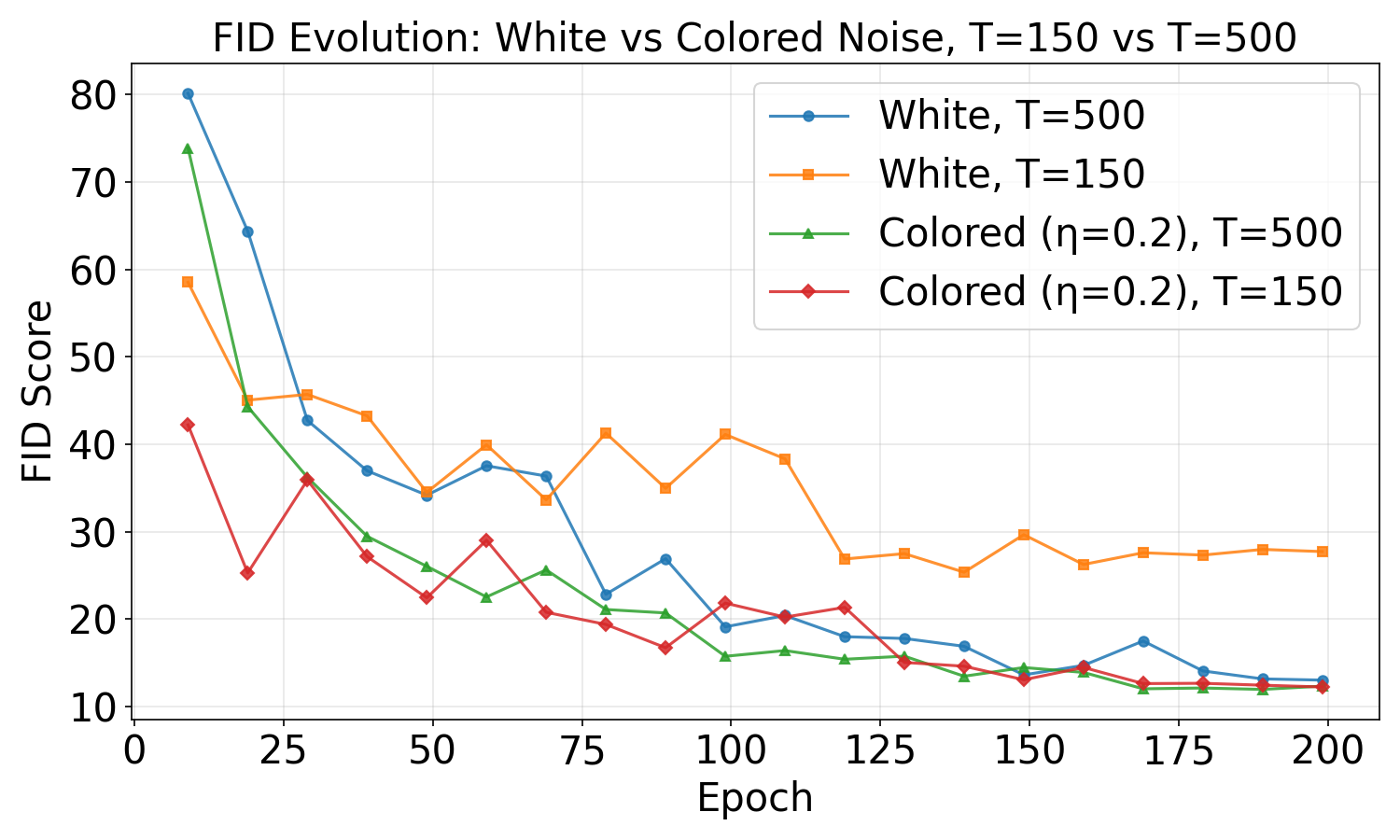}
    \caption{FID training curves. Flicker-DDPM at $T=150$ converges faster and to a lower asymptote than white DDPM at $T=500$, despite using $3.33\times$ fewer sampling steps.}
    \label{fig:fid}
\end{figure}

\begin{figure}[h!]
    \centering
    \includegraphics[width=\columnwidth]{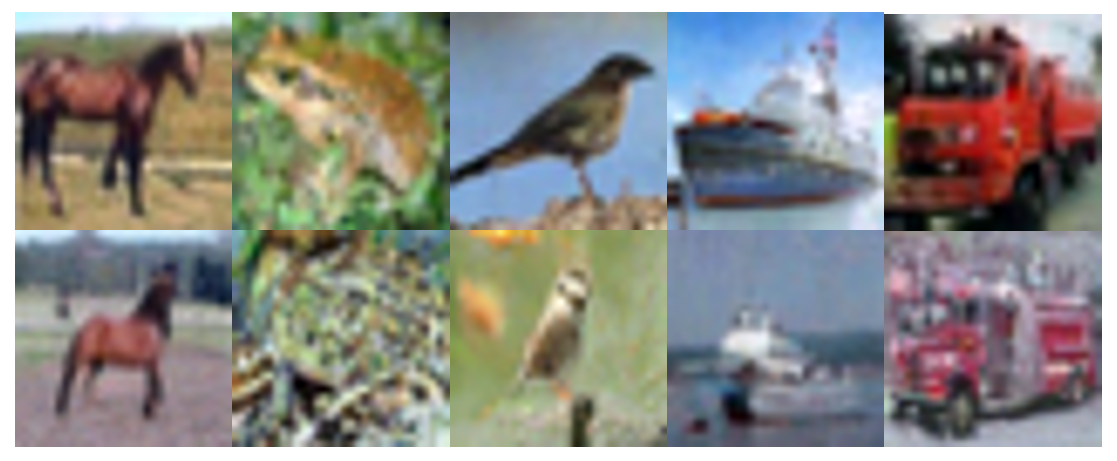}
    \caption{Generated samples at $T=150$. Top: Flicker-DDPM ($\eta=0.2$); bottom: white-noise DDPM. At the same step budget, Flicker-DDPM produces sharper, more coherent images with fewer artifacts, consistent with the FID gap in Table~\ref{tab:fid}.}
    \label{fig:samples}
\end{figure}

\subsection{Analysis of results}

Several features of Table~\ref{tab:fid} merit discussion:

\emph{Robustness to step reduction.}---The key observation is not merely that Flicker-DDPM is better at a fixed $T$, but that its quality degrades far more gracefully as $T$ decreases. From $T=500$ to $T=150$, white DDPM's FID nearly doubles ($13.02 \to 25.36$), whereas Flicker-DDPM's FID increases by only $2\%$ ($11.96 \to 12.24$). This insensitivity to $T$ is a direct consequence of the spectral matching condition $L(\kk) = 0$: when the noise already carries the correct frequency structure, the reverse process need not spend steps reshaping the spectrum, and shorter trajectories suffice.

\emph{Diminishing gap at large $T$.}---At $T=500$, the FID difference narrows to $8\%$. This is expected: given enough steps, even white DDPM eventually completes the spectral restructuring---the nonlinear dynamics just take longer. The colored-noise advantage is precisely that it eliminates this bottleneck.

\emph{Training convergence.}---Figure~\ref{fig:fid} shows that Flicker-DDPM at $T=150$ converges to a lower FID asymptote than white DDPM at $T=500$, and does so in fewer training epochs. The training objective itself is easier to minimize when the noise matches data statistics: at each timestep, the denoising target $\epsilon$ has spatial correlations consistent with the signal to be recovered, reducing the effective complexity of the learning problem.

\emph{Architectural universality.}---Flicker-DDPM requires no architectural modifications. The same UNet trained with white noise can be retrained with colored noise by changing only the noise generation module. This makes the approach a drop-in replacement applicable to any existing DDPM pipeline.

\section{Theoretical Explanation}

The central theoretical insight is that colored noise \emph{linearizes} the reverse diffusion dynamics in Fourier space. We develop a frequency-domain linear theory that quantitatively predicts the spectral evolution of the reverse process and explains why Flicker-DDPM requires far fewer steps than white-noise DDPM. In the Supplemental Material, we further reformulate the stochastic dynamics using the MSRJD path integral~\cite{martin1973statistical,janssen1976lagrangean}, which provides a systematic perturbation theory for nonlinear mode-coupling corrections (Appendix~\ref{app:msrjd} and~\ref{app:nonlinear}).

\subsection{Frequency-domain linearization}

We emphasize that the linearization presented here is fundamentally different from treating the denoiser as a linear operator in pixel space. A pixel-space linear approximation $s_\theta(x) \approx Ax + b$ would be crude and inaccurate---trained denoisers are highly nonlinear functions of $x$. Our approach instead works in the \emph{spatial Fourier basis}: for translationally invariant data, the score function's Jacobian $\partial s_\theta/\partial x$ is a circulant matrix (to leading order), which is \emph{diagonal} in Fourier space. This permits a mode-by-mode linearization $\tilde{s}_\theta(\kk) \approx -\gamma(\kk,t)\tilde{x}(\kk) + c(\kk,t)$ that captures frequency-dependent physics invisible to any pixel-space linear model.

The reverse SDE of a VP-type diffusion~\cite{anderson1982reverse,song2020score} in real space reads $dx = [\frac{1}{2}\beta x + \beta s_\theta]\,dt + \sqrt{\beta}\,dw$. Taking the spatial Fourier transform (unitary convention) decouples the linear part:
\begin{equation}
    d\tilde{x}(\kk) = \bigl[\tfrac{1}{2}\beta\,\tilde{x} + \beta\,\tilde{s}_\theta(\kk;\{\tilde{x}\})\bigr]dt + \sqrt{\beta\St(\kk)}\,d\tilde{w}.
    \label{eq:reverse_fourier}
\end{equation}
The score function $\tilde{s}_\theta(\kk)$ depends nonlinearly on \emph{all} modes $\{\tilde{x}(\kk')\}$ through the neural network. However, for translationally invariant data, its Jacobian $\partial\tilde{s}/\partial\tilde{x}$ is approximately diagonal in Fourier space. Linearizing about the mean trajectory:
\begin{equation}
    \tilde{s}_\theta(\kk) \approx -\gamma(\kk,t)\,\tilde{x}(\kk) + c(\kk,t),
    \label{eq:linearize}
\end{equation}
where $\gamma(\kk,t) > 0$ is the frequency-dependent \emph{effective restoring force} of the score function. This linearization treats each $\kk$-mode as an independent Ornstein--Uhlenbeck process with drift coefficient $\mu(\kk,t) = \beta(t)[\frac{1}{2} - \gamma(\kk,t)]$.

\subsection{Power spectrum ODE}

The linearized SDE~\eqref{eq:reverse_fourier} suggests a natural observable: the \emph{variance} of each Fourier mode across the sample ensemble, $D(\kk,t) \equiv \mathrm{Var}[\tilde{x}(\kk,t)]$. This quantity arises directly from applying It\^o's lemma to the modulus-squared $|\tilde{x}|^2$ in the stochastic process; it differs from the conventional power spectrum $\Pdata(\kk) = \langle|\tilde{x}_0(\kk)|^2\rangle$ only in that $D$ tracks the instantaneous variance during the \emph{reverse process} (including contributions from both residual signal and injected noise), whereas $\Pdata$ describes the clean data distribution alone. At $t = 0$ (end of generation), $D(\kk,0) \to \Pdata(\kk)$ for a well-trained model.

Applying It\^o's lemma (separating real and imaginary parts of the complex field, each driven by independent noise of variance $\beta\St/2$) yields:
\begin{equation}
    \frac{dD(\kk,t)}{dt} = \beta(t)\left[1 - 2\gamma(\kk,t)\right] D(\kk,t) + \beta(t)\,\St(\kk).
    \label{eq:ode}
\end{equation}
This first-order linear ODE governs the spectral evolution of each mode independently. The first term is a drift: when $\gamma > 1/2$, the coefficient $(1-2\gamma) < 0$ drives $D$ toward a steady state; the second term $\beta\St$ is continuous noise injection, whose spectral profile is flat ($\St = 1$) for white noise and shaped ($\St \propto k^{-\alpha}$) for colored noise.

\subsection{Measuring $\gamma(\kk,t)$ from the network}

The noise-prediction network satisfies $\tilde\epsilon_\theta(\kk) = \sqrt{1-\bar\alpha_t}\,\gamma(\kk,t)\,\tilde{x}_t(\kk) + \text{const}$ under the linearization~\eqref{eq:linearize}. Thus $\gamma$ is directly accessible via linear regression over a batch of $B$ samples at each timestep $t$:
\begin{equation}
    \gamma(\kk,t) = \frac{1}{\sqrt{1-\bar\alpha_t}}\cdot\frac{\mathrm{Cov}\bigl[\tilde\epsilon_\theta(\kk),\,\tilde{x}_t(\kk)\bigr]}{\mathrm{Var}\bigl[\tilde{x}_t(\kk)\bigr]},
    \label{eq:gamma}
\end{equation}
where both moments are computed over the batch. This gives a complete $\gamma(\kk,t)$ map with a single forward pass per timestep---$O(T)$ total cost. The coefficient of determination $R^2(\kk,t)$ of this regression simultaneously quantifies how well the linear theory describes the actual network response: $R^2 \approx 1$ indicates a nearly linear denoiser, while $R^2 \ll 1$ signals strong nonlinear mode coupling.

\subsection{Spectral mismatch and the advantage of colored noise}

\begin{figure}[h!]
    \centering
    \includegraphics[width=\columnwidth]{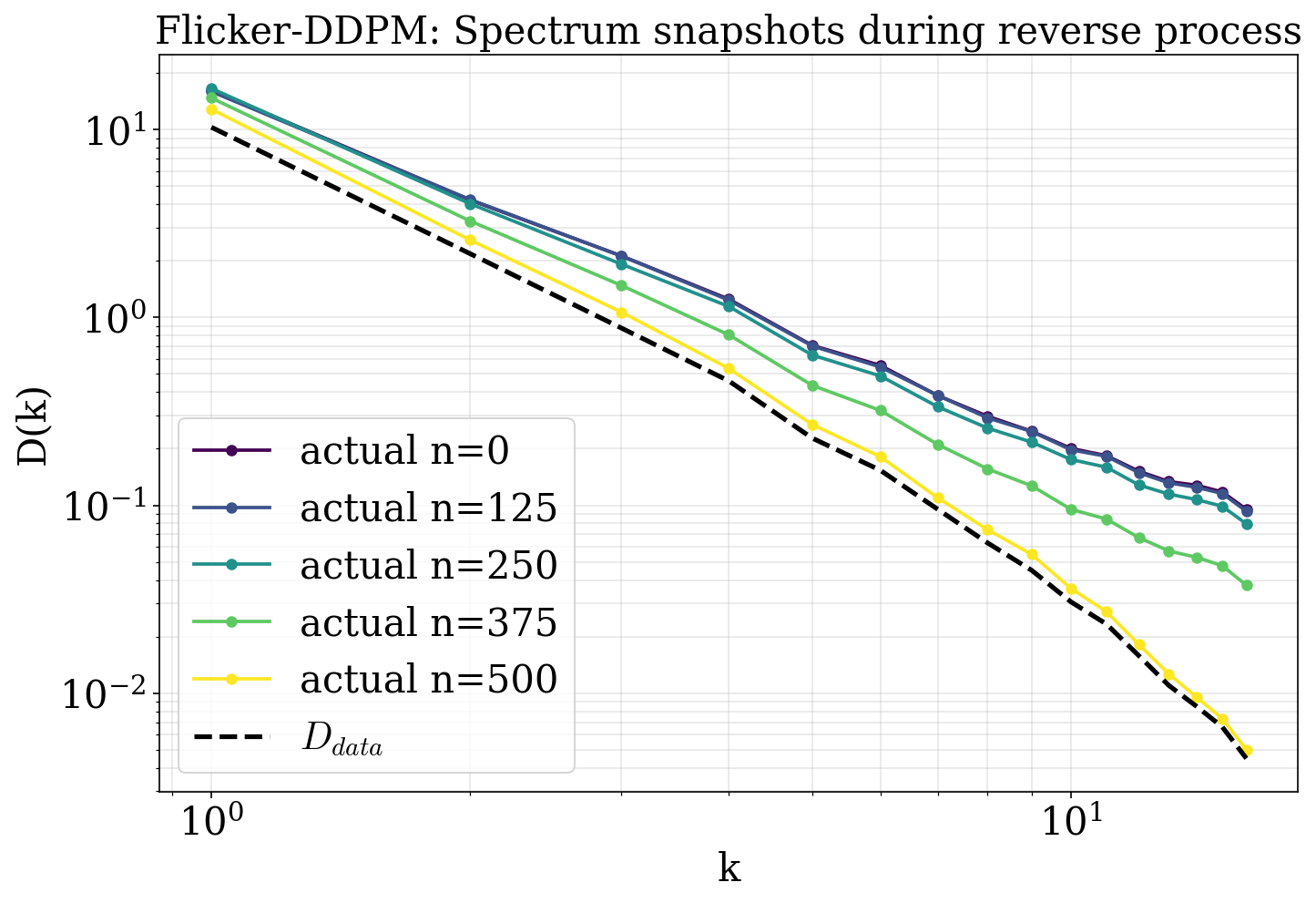}\\[4pt]
    \includegraphics[width=\columnwidth]{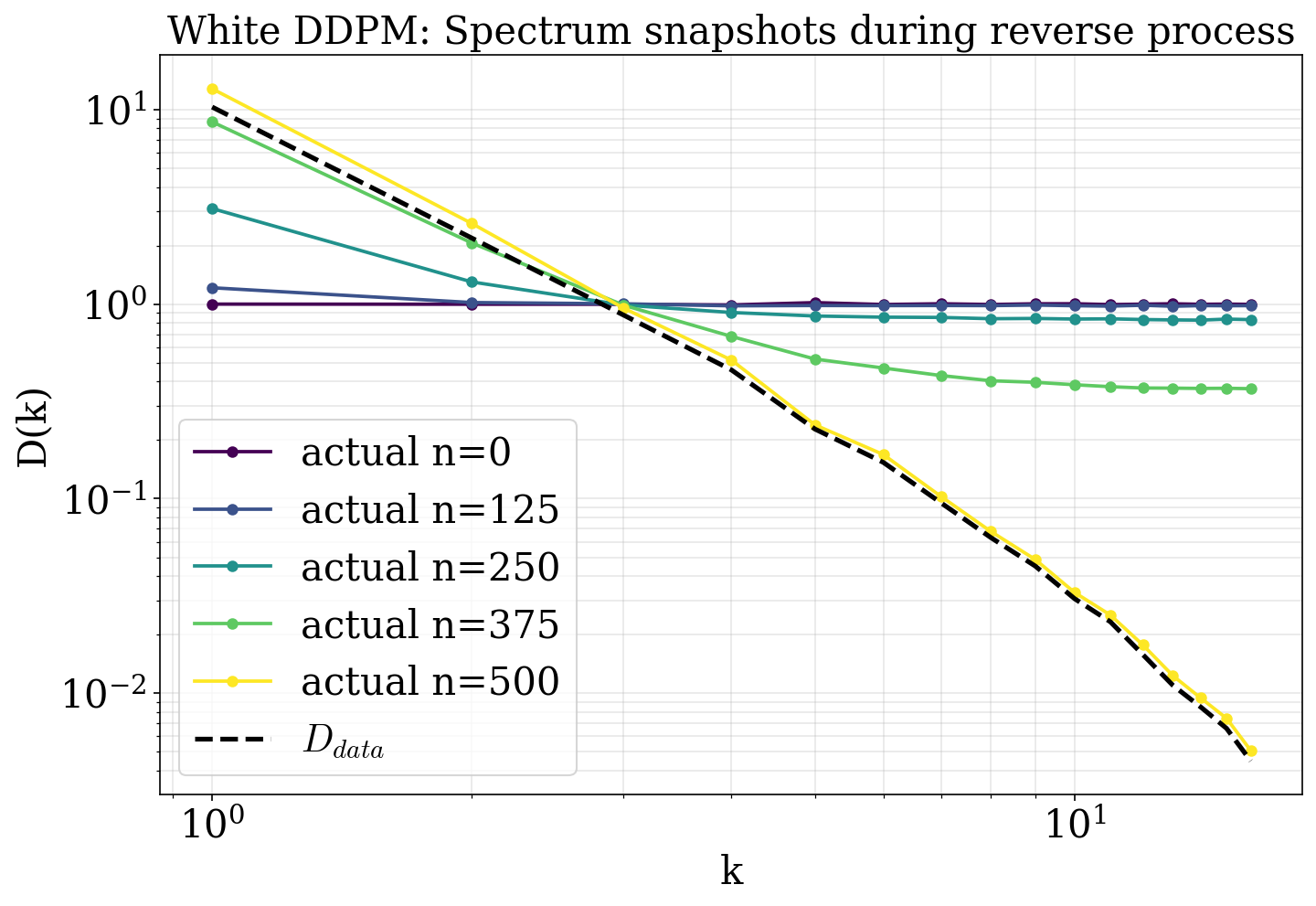}
    \caption{Radial power spectrum $D(k)$ at five equally-spaced reverse-process snapshots ($n = 0, 125, 250, 375, 500$). Colored lines with markers: actual measurements at each step; black dashed: target data spectrum $D_{\text{data}}(k)$. \textbf{Top (Flicker-DDPM)}: the spectrum starts with the correct power-law slope and converges smoothly to $D_{\text{data}}$---all frequency modes evolve in concert. \textbf{Bottom (white DDPM)}: the spectrum starts flat ($D \approx 1$ for all $k$) and must be entirely rebuilt during generation;low-frequency modes begin rebuilding first through strongly nonlinear dynamics, while high-frequency modes ($k \gtrsim 5$) remain frozen near unity until late steps ($n > 350$), revealing the mode desynchronization that necessitates extra sampling steps.}
    \label{fig:spectrum_snapshots}
\end{figure}

The advantage emerges from the initial condition at the start of reverse diffusion ($t = T$). Define the spectral mismatch:
\begin{equation}
    L(\kk) \equiv D(\kk,T) - \Pdata(\kk).
\end{equation}
For white noise, $D(\kk,T) \approx 1$ for all $\kk$, giving large mismatch: $L_\text{w}(k{=}1) = 1 - 10.3 = -9.3$. The reverse process must rebuild the entire power-law hierarchy from a flat spectrum---a task visible in Fig.~\ref{fig:spectrum_snapshots}~(bottom), where high-frequency modes remain frozen near $D \approx 1$ for $n < 375$, far above their targets $\Pdata(k) \ll 1$.

For Flicker-DDPM with $\St(\kk) \propto \Pdata(\kk)$, the terminal distribution inherits the data spectral shape:
\begin{equation}
    L_\text{f}(\kk) = 0 \quad \forall\, \kk.
    \label{eq:exact}
\end{equation}
This is an algebraic identity---exact for any $T$ and any data distribution. It holds because the forward process $x_T = \sqrt{\bar\alpha_T}\,x_0 + \sqrt{1-\bar\alpha_T}\,L\epsilon$ converges to $\mathcal{N}(0, \St)$ as $\bar\alpha_T \to 0$, and $\St(\kk) = \Pdata(\kk)$ by construction. The reverse process starts at the correct spectral shape (Fig.~\ref{fig:spectrum_snapshots}, top: $n=0$ already carries the $k^{-\alpha}$ slope) and allocates \emph{all} sampling budget to content generation. The $3.33\times$ acceleration corresponds precisely to the $\sim\!350$ steps that white DDPM wastes on spectral reshaping ($500 - 150 = 350$).

\subsection{Linearization as the mechanism}

We verify the linearization theory experimentally by measuring $\gamma(\kk,t)$ and $R^2(\kk,t)$ on our trained models ($T=500$, 512 sample batch). The coefficient of determination $R^2(\kk,t)$ of the regression~\eqref{eq:gamma} quantifies how well the linear theory describes the actual network response at each frequency and timestep.

\begin{figure}[h!]
    \centering
    \includegraphics[width=\columnwidth]{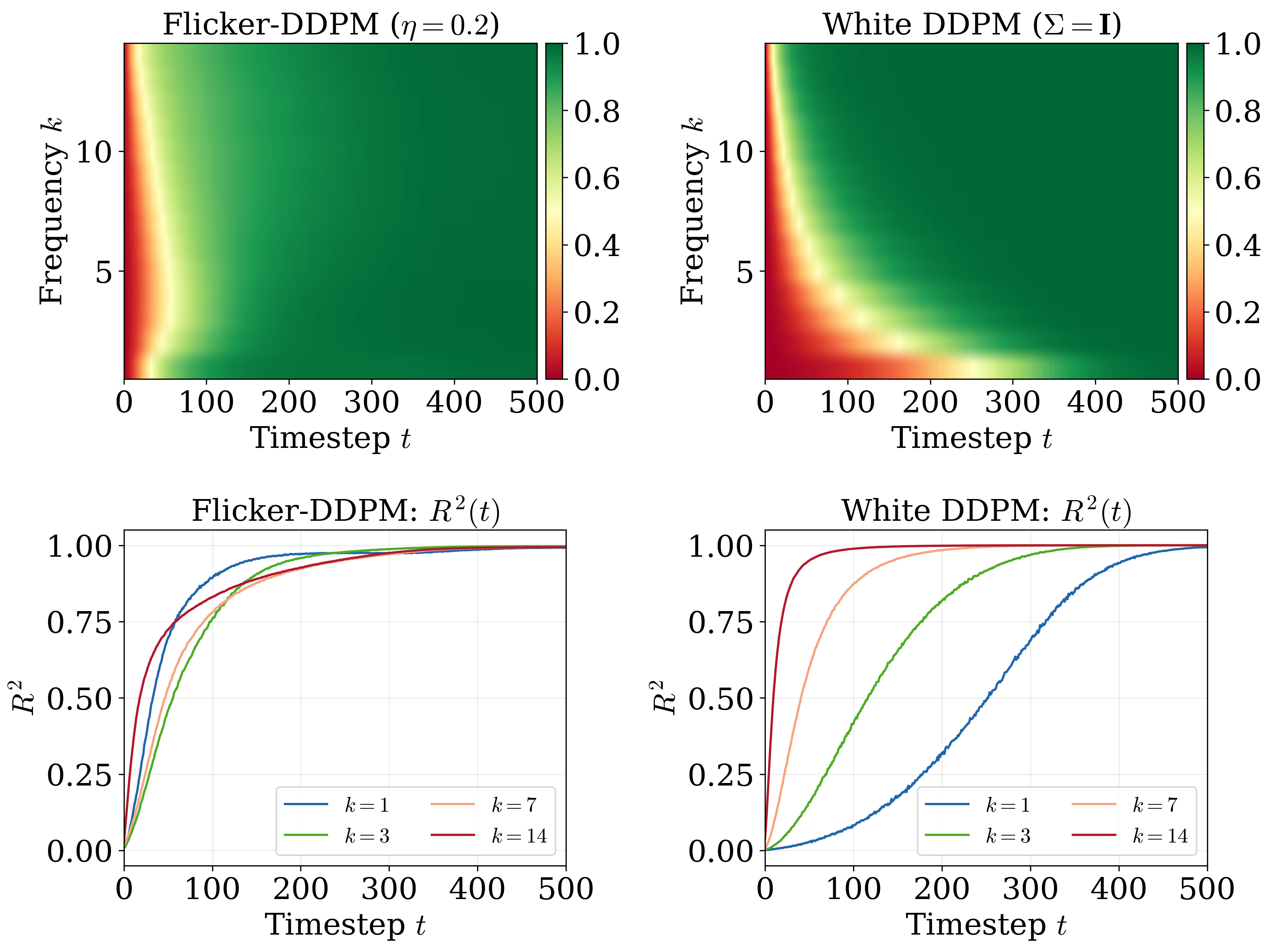}
    \caption{Linearization quality $R^2(k,t)$ of the denoising dynamics ($T = 500$ steps). \textbf{Top}: $R^2(k, t)$ heatmaps ($R^2 = 1$: perfectly linear). \textbf{Bottom}: $R^2(t)$ for selected frequency modes. Flicker-DDPM (left) achieves $R^2 > 0.95$ uniformly across all modes by $t \approx 200$, while white DDPM (right) shows extreme disparity---$k=14$ linearizes quickly but $k=1$ remains strongly nonlinear until $t > 350$. Note that both models exhibit $R^2 \to 0$ as $t \to 0$ (the final few denoising steps), reflecting the inherent nonlinearity of score estimation when $x_t$ is nearly clean---this regime is not eliminated by colored noise but constitutes only a small fraction of total steps.}
    \label{fig:R2}
\end{figure}

Figure~\ref{fig:R2} reveals the central mechanism: colored noise \emph{linearizes} the reverse diffusion process across the bulk of the trajectory. In white DDPM, the time-averaged $R^2$ ranges from 0.505 (at $k=1$) to 0.968 (at $k=14$)---a spread of 0.463---indicating that low-frequency dynamics are highly nonlinear throughout most of the sampling trajectory. In Flicker-DDPM, the mean $R^2$ is uniformly $\sim\!0.85$--$0.90$ across all modes (spread 0.043), meaning the denoiser operates in a nearly linear regime at all frequencies (Table~\ref{tab:r2}).

This linearization explains both the quality improvement and the speedup: linear dynamics converge in fewer steps and accumulate less discretization error. The $\sim\!350$ steps that white DDPM wastes correspond precisely to the time required for low-frequency modes to enter the linear regime.

\begin{figure}[h!]
    \centering
    \includegraphics[width=\columnwidth]{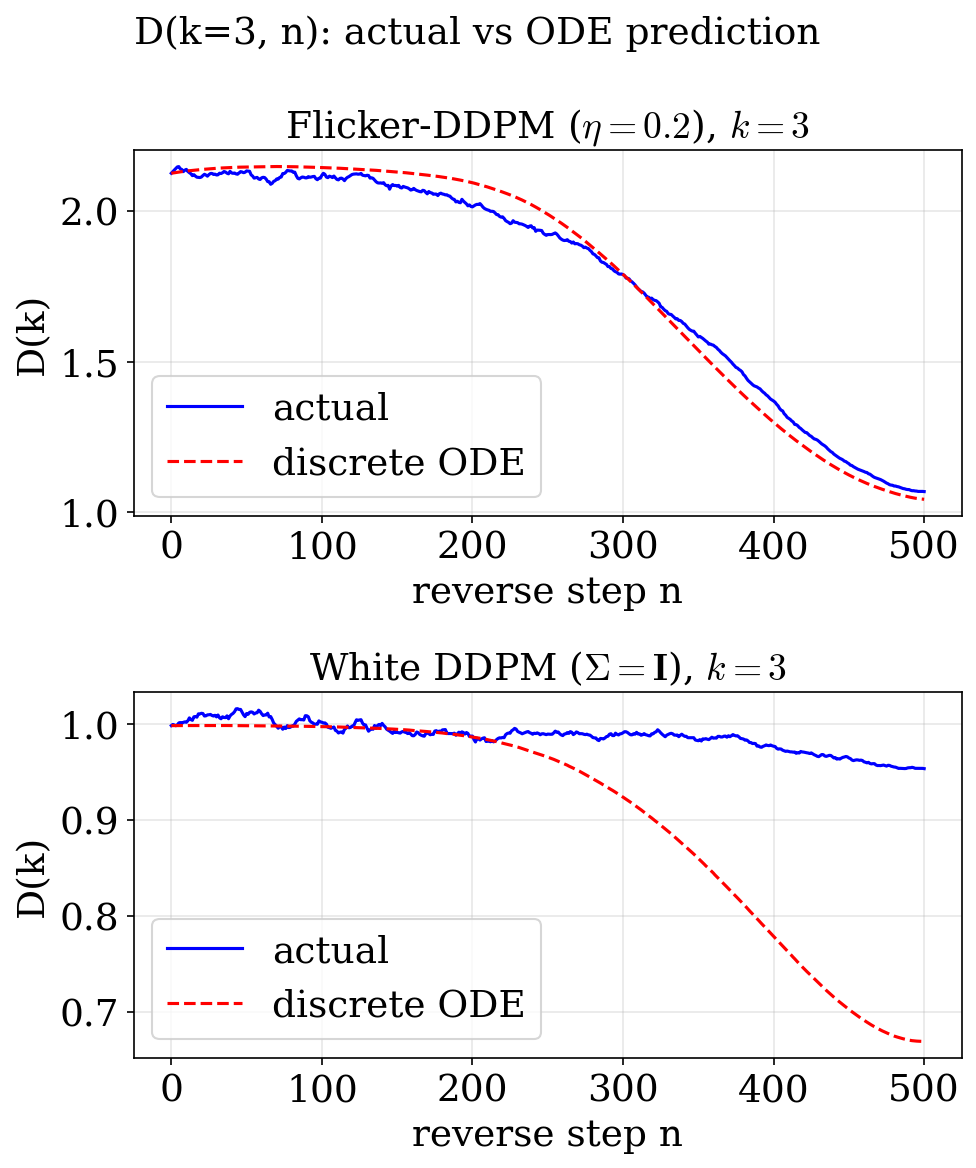}
    \caption{Power spectral density $D(k{=}3)$ during reverse sampling: actual measurement (solid) versus the linear ODE prediction~\eqref{eq:ode} (dashed). Left: Flicker-DDPM---theory and experiment agree closely, confirming that the dynamics are well-described by linear theory. Right: white DDPM---the linear ODE predicts monotonic decay, but the actual $D$ remains frozen near unity, revealing strongly nonlinear dynamics at this mid-frequency.}
    \label{fig:spectrum}
\end{figure}

Figure~\ref{fig:spectrum} focuses on $k=3$, the mid-low frequency range where noise--signal competition is strongest. For Flicker-DDPM, the measured $D(k{=}3,n)$ closely tracks the linear ODE prediction throughout the entire reverse trajectory. For white DDPM, the actual dynamics at $k=3$ remain effectively frozen near unity, while the network preferentially rebuilds low-frequency power ($k = 1$--$2$) through nonlinear mode coupling. High-frequency suppression ($k > 5$) occurs only after the large-scale spectral structure is established (step $\sim\!350$+). This is the sequential bottleneck eliminated by spectral matching.

\begin{table}[b]
    \caption{Time-averaged $R^2$ across frequency modes. Flicker-DDPM dynamics are well-described by linear theory at \emph{all} frequencies; white DDPM fails catastrophically at low $k$.}
    \label{tab:r2}
    \begin{ruledtabular}
    \begin{tabular}{lcccc}
        & $k=1$ & $k=3$ & $k=7$ & $k=14$ \\
        \hline
        White $R^2$   & 0.505 & 0.742 & 0.893 & 0.968 \\
        Flicker $R^2$ & 0.895 & 0.854 & 0.852 & 0.893 \\
    \end{tabular}
    \end{ruledtabular}
\end{table}

The $R^2$ uniformity in Table~\ref{tab:r2} confirms this picture quantitatively: the linear ODE~\eqref{eq:ode} describes Flicker-DDPM dynamics uniformly across all frequencies, while white DDPM shows catastrophic failure at low $k$ ($R^2 = 0.505$ at $k=1$), precisely where the mismatch $|L_\text{w}|$ is largest. The physical interpretation is that when spectral reshaping is required, the network must perform a highly nonlinear frequency-dependent transformation---coupling low-frequency modes through higher-order terms neglected in~\eqref{eq:linearize}. When $L(\kk) = 0$, this nonlinear burden is eliminated, and the denoiser operates in the well-controlled linear regime described by Eq.~\eqref{eq:ode}.

Figure~\ref{fig:spectrum_snapshots} provides direct visual confirmation: in Flicker-DDPM (top), the spectrum starts with the correct power-law slope at $n=0$ and all frequency modes converge to $D_{\text{data}}$ in concert, whereas in white DDPM (bottom) the spectrum begins flat and must be rebuilt sequentially from low to high frequencies, requiring $\sim\!350$ additional steps for all modes to converge---a direct manifestation of mode desynchronization.

\section{Discussion and Outlook}

Flicker-DDPM demonstrates that matching noise statistics to data statistics yields substantial practical gains through a simple, principled modification. The underlying physics is clear: SOC-like multi-scale dynamics in standard DDPM are not intrinsic but arise from spectral mismatch, and are eliminated when the noise carries the correct $1/f$ structure.

The universality of Eq.~\eqref{eq:eta_formula} suggests broad applicability. Power-law spectra characterize not only natural images but also protein distance correlations~\cite{dill2008protein,masuki2025generative}, astrophysical variability~\cite{uttley2002measuring}, audio signals~\cite{voss1975nature}, and electronic noise~\cite{hooge1981experimental}.   We note that Eq.~\eqref{eq:eta_formula} is derived specifically for $d=2$ spatial dimensions and the Mat\'ern covariance family; for data of other dimensionality or different correlation structures, the same spectral-matching methodology applies---measure the data power spectrum, identify an appropriate covariance model, and match the noise kernel parameters accordingly---but the specific formula relating $\alpha$ to $\eta$ will differ. The approach is orthogonal to existing acceleration methods (DDIM~\cite{song2020denoising}, progressive distillation~\cite{salimans2022progressive}, consistency models~\cite{song2024improved}) and complementary to noise-schedule design principles~\cite{karras2022elucidating}; it may be combined with them for further gains.

One limitation of the pure power-law noise model is its imprecision at high spatial frequencies: real image spectra deviate from a strict $k^{-\alpha}$ law in the high-$k$ regime, and the power-law kernel systematically underweights fine-scale texture. A promising extension is to augment high-frequency noise strength beyond the $1/f^\alpha$ envelope---for instance, incorporating blue-noise correlations that preferentially enhance high-frequency content~\cite{huang2024blue}. Combining the low-frequency fidelity of Flicker-DDPM with targeted high-frequency boosting may further improve texture sharpness without sacrificing the acceleration benefits demonstrated here.

\begin{acknowledgments}

The author acknowledges useful discussions with Yanyi Luo, Lei Wang, Meng Xiao, Ziming Liu, Chao Tang, and Lei-Han Tang. This work was supported by the National Natural Science Foundation of China under Grant No.\ 125B1012 and No.\ 124B1012.

\end{acknowledgments}

\bibliography{references}

\onecolumngrid
\newpage
\appendix

\section{Derivation of the Power Spectrum ODE}
\label{app:ode}

We derive Eq.~\eqref{eq:ode} from first principles. The starting point is the VP-type reverse SDE in real space:
\begin{equation}
    dx(\rr, t) = \left[\tfrac{1}{2}\beta(t)\, x(\rr, t) + \beta(t)\, s_\theta(\rr;\, \{x\}, t)\right] dt + \sqrt{\beta(t)}\, dw(\rr, t),
    \label{eq:app_rsde}
\end{equation}
where $s_\theta = \nabla_x \log p_t(x)$ is the score function, $\beta(t)$ is the noise schedule, and $w(\rr,t)$ is standard Brownian motion with $\langle dw(\rr,t)\,dw(\rr',t)\rangle = \delta_{\rr\rr'}\,dt$.

\subsection{Fourier transform}

Define the unitary discrete Fourier transform (DFT):
\begin{equation}
    \tilde{x}(\kk,t) = \frac{1}{N^{d/2}}\sum_\rr x(\rr,t)\,e^{-i\kk\cdot\rr}.
\end{equation}
Since the DFT is linear, applying it to Eq.~\eqref{eq:app_rsde} yields
\begin{equation}
    d\tilde{x}(\kk) = \bigl[\tfrac{1}{2}\beta\,\tilde{x}(\kk) + \beta\,\tilde{s}_\theta(\kk;\{\tilde{x}\})\bigr]\,dt + \sqrt{\beta\,\St(\kk)}\,d\tilde{w}(\kk),
    \label{eq:app_fourier_sde}
\end{equation}
where $\St(\kk)$ is the Fourier eigenvalue of the noise covariance ($\St = 1$ for white noise), and the transformed noise satisfies $\langle d\tilde{w}(\kk)\,d\tilde{w}^*(\kk')\rangle = \delta_{\kk\kk'}\,dt$.

\subsection{Linearization of the score function}

For data drawn from a translationally invariant distribution, the Jacobian $J(\rr,\rr') = \partial s_\theta(\rr)/\partial x(\rr')$ depends only on $\rr - \rr'$ in a statistical sense. A translation-invariant operator is diagonalized by the DFT with eigenvalues $\Lambda(\kk,t) = \sum_\rr J(\rr)\,e^{-i\kk\cdot\rr}$.

Thus the first-order Taylor expansion of $\tilde{s}_\theta$ in Fourier space is mode-diagonal:
\begin{equation}
    \tilde{s}_\theta(\kk) \approx -\gamma(\kk,t)\,\tilde{x}(\kk) + c(\kk,t),
    \label{eq:app_linearize}
\end{equation}
where $\gamma(\kk,t) \equiv -\Lambda(\kk,t) > 0$ is the effective restoring force. Substituting into Eq.~\eqref{eq:app_fourier_sde} gives the effective Langevin equation:
\begin{equation}
    d\tilde{x}(\kk) = \mu(\kk,t)\,\tilde{x}(\kk)\,dt + \beta(t)\,c(\kk,t)\,dt + \sqrt{\beta(t)\St(\kk)}\,d\tilde{w}(\kk),
    \label{eq:app_langevin}
\end{equation}
with the effective drift coefficient $\mu(\kk,t) \equiv \beta(t)[\frac{1}{2} - \gamma(\kk,t)]$.

\subsection{It\^o's lemma for the power spectrum}

Define the mean $m(t) \equiv \langle\tilde{x}(t)\rangle$ and fluctuation $y(t) \equiv \tilde{x}(t) - m(t)$. Since $\langle d\tilde{w}\rangle = 0$, the mean satisfies $dm/dt = \mu\,m + \beta c$, and the fluctuation obeys~\cite{gardiner2009stochastic}:
\begin{equation}
    dy = \mu\,y\,dt + \sqrt{\beta\St}\,d\tilde{w}.
    \label{eq:app_fluctuation}
\end{equation}

For $\kk \neq 0$, $y$ is complex: $y = y_R + iy_I$. The complex noise decomposes as $d\tilde{w} = (dw_R + i\,dw_I)/\sqrt{2}$, so:
\begin{equation}
    dy_R = \mu\,y_R\,dt + \sqrt{\beta\St/2}\,dw_R, \quad
    dy_I = \mu\,y_I\,dt + \sqrt{\beta\St/2}\,dw_I,
\end{equation}
where $dw_R,\,dw_I$ are independent real Brownian motions. We apply the multivariate It\^o formula to $f(y_R,y_I) = y_R^2 + y_I^2 = |y|^2$:
\begin{align}
    d|y|^2 &= 2y_R\,dy_R + 2y_I\,dy_I + \frac{\partial^2 f}{\partial y_R^2}\frac{(dy_R)^2}{2} + \frac{\partial^2 f}{\partial y_I^2}\frac{(dy_I)^2}{2} \nonumber\\
    &= 2\mu|y|^2\,dt + \frac{\beta\St}{2}\,dt + \frac{\beta\St}{2}\,dt + \text{(martingale terms)}.
\end{align}
Taking the expectation (martingale terms vanish) and defining $D(\kk,t) \equiv \langle|y(\kk,t)|^2\rangle = \mathrm{Var}[\tilde{x}(\kk,t)]$:
\begin{equation}
    \frac{dD}{dt} = 2\mu\,D + \beta\St = 2\beta\bigl(\tfrac{1}{2} - \gamma\bigr)D + \beta\St.
\end{equation}
Expanding:
\begin{equation}
    \boxed{\frac{dD(\kk,t)}{dt} = \beta(t)\bigl[1-2\gamma(\kk,t)\bigr]\,D(\kk,t) + \beta(t)\,\St(\kk).}
    \label{eq:app_ode_final}
\end{equation}
This is a first-order linear ODE for each mode independently, with $\gamma$ and $\beta$ as time-dependent coefficients. The first term drives $D$ toward a steady state when $\gamma > 1/2$; the second term is continuous noise injection at rate $\beta\St$. \hfill$\square$

\section{Discrete Integration and Measurement of $\gamma(\kk,t)$}
\label{app:gamma}

\subsection{Relation between $\varepsilon_\theta$ and $\gamma$}

In DDPM, the noise-prediction network satisfies $\varepsilon_\theta = -\sqrt{1-\bar\alpha_t}\,s_\theta$, where $s_\theta$ is the score function. Under the linearization~\eqref{eq:app_linearize}, in Fourier space:
\begin{equation}
    \tilde{s}_\theta(\kk) = -\gamma(\kk,t)\,\tilde{x}_t(\kk) + c(\kk,t),
\end{equation}
so that
\begin{equation}
    \tilde\varepsilon_\theta(\kk) = \sqrt{1-\bar\alpha_t}\,\gamma(\kk,t)\,\tilde{x}_t(\kk) + \text{const}.
    \label{eq:app_eps_linear}
\end{equation}
Therefore $\gamma(\kk,t)$ equals the slope of $\tilde\varepsilon_\theta$ regressed on $\tilde{x}_t$, divided by $\sqrt{1-\bar\alpha_t}$:
\begin{equation}
    \gamma(\kk,t) = \frac{1}{\sqrt{1-\bar\alpha_t}}\cdot\frac{\mathrm{Re}\bigl[\mathrm{Cov}(\tilde\varepsilon_\theta(\kk),\,\tilde{x}_t(\kk))\bigr]}{\mathrm{Var}[\tilde{x}_t(\kk)]},
    \label{eq:app_gamma_regression}
\end{equation}
computed over a batch of $B$ samples at each timestep. The coefficient of determination $R^2(\kk,t)$ of this regression simultaneously quantifies the validity of the linearization: $R^2 \approx 1$ implies nearly linear dynamics, while $R^2 \ll 1$ indicates strong nonlinear mode coupling.

\subsection{Discrete variance propagation}

The DDPM reverse update rule is:
\begin{equation}
    x_{t-1} = \frac{1}{\sqrt{\alpha_t}}\left(x_t - \frac{1-\alpha_t}{\sqrt{1-\bar\alpha_t}}\,\varepsilon_\theta(x_t,t)\right) + \sigma_t\,z,
    \label{eq:app_ddpm_update}
\end{equation}
where $z \sim \mathcal{N}(0,\Sigma)$. Under linearization~\eqref{eq:app_eps_linear}, the Fourier-space update becomes:
\begin{equation}
    \tilde{x}_{t-1}(\kk) = \frac{1 - (1-\alpha_t)\gamma(\kk,t)}{\sqrt{\alpha_t}}\,\tilde{x}_t(\kk) + \text{mean-field} + \sigma_t\,\tilde{z}(\kk).
\end{equation}
Taking the variance (fluctuation part only) yields the discrete power-spectrum recurrence:
\begin{equation}
    \boxed{D_{t-1}(\kk) = \frac{\bigl[1-(1-\alpha_t)\gamma(\kk,t)\bigr]^2}{\alpha_t}\,D_t(\kk) + \sigma_t^2\,\St(\kk).}
    \label{eq:app_discrete_ode}
\end{equation}
This is exact under the linearization assumption, with no continuous-time approximation. Iterating from $t = T$ (with initial condition $D_T = \St$) to $t = 0$ gives the predicted power spectrum trajectory. Comparison with the actual $D(\kk,t)$ measured from the reverse process quantifies the linearization error at each frequency. \hfill$\square$

\section{Proof of Exact Spectral Matching $L(\kk) = 0$}
\label{app:matching}

\begin{theorem}[Exact spectral matching]
For any $T > 0$ and any data distribution $p(x_0)$ with power spectrum $\Pdata(\kk)$, if the colored-noise covariance satisfies $\St(\kk) = \Pdata(\kk)$ (Equal-SNR condition), then the spectral mismatch $L(\kk) \equiv D(\kk,T) - \Pdata(\kk) = 0$ for all $\kk$.
\end{theorem}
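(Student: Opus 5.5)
The proof computes $D(\kk,T)$ directly from the forward marginal~\eqref{eq:forward}: $x_T = \sqrt{\bar\alpha_T}\,x_0 + \sqrt{1-\bar\alpha_T}\,L\epsilon$. The mismatch is then evaluated under the Equal-SNR condition. Apply the unitary DFT of Appendix~\ref{app:ode}. By linearity,
\begin{equation*}
\tilde{x}_T(\kk) = \sqrt{\bar\alpha_T}\,\tilde{x}_0(\kk) + \sqrt{1-\bar\alpha_T}\,\widetilde{(L\epsilon)}(\kk).
\end{equation*}
We use the circulant (FFT) implementation of Sec.~II\,B: $\Sigma$ is diagonalized by the DFT with eigenvalues $\St(\kk)$. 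Hence $\widetilde{(L\epsilon)}(\kk)=\sqrt{\St(\kk)}\,\tilde\xi_w(\kk)$, with $\tilde\xi_w$ a unitary transform of white noise. It therefore has $\mathbb{E}|\widetilde{(L\epsilon)}(\kk)|^2=\St(\kk)$ and no cross-correlation between distinct modes (up to the Hermitian pairing $\kk\leftrightarrow-\kk$).

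Next, take the variance over the joint law of $(x_0,\epsilon)$. Since $\epsilon$ is independent of $x_0$ and has zero mean, the cross term vanishes. This gives
\begin{equation*}
D(\kk,T) = \bar\alpha_T\,\mathrm{Var}[\tilde{x}_0(\kk)] + (1-\bar\alpha_T)\,\St(\kk).
\end{equation*}
We identify $\mathrm{Var}[\tilde x_0(\kk)]$ with $\Pdata(\kk)$. For $\kk\neq0$ this follows because translation invariance forces $\langle\tilde x_0(\kk)\rangle=0$. For $\kk=0$ we adopt the convention that data are centered, or equivalently that $\Pdata$ is defined as the variance spectrum. Substituting $\St=\Pdata$ then yields $D(\kk,T)=[\bar\alpha_T+(1-\bar\alpha_T)]\Pdata(\kk)=\Pdata(\kk)$. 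This holds for every $\bar\alpha_T\in[0,1]$, hence for every $T$, and the statement follows.

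The main obstacle is the exactness claim, which leans on two idealizations that must be stated as hypotheses rather than proven. The first is that $\Sigma$ is exactly diagonal in the DFT basis. With the Cholesky implementation on an open lattice and the Manhattan kernel~\eqref{eq:kernel}, $\Sigma$ is Toeplitz, not circulant, so I would take the circulant-embedded (periodic) version as the definition of $\St$. The second is the identification of $\Pdata$ with the mode variance rather than the raw second moment. To keep the statement clean, I would also handle the ensemble average when $\Pdata$ is only the azimuthal average. In that case the result holds mode-by-mode provided $\St(\kk)$ is matched to the full two-dimensional $\Pdata(\kk)$, and I would note this caveat explicitly.
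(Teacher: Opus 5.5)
Your proposal is correct and is essentially the paper's own proof: split the forward marginal at $t=T$ into independent signal and noise parts, obtain $D(\kk,T)=\bar\alpha_T\Pdata(\kk)+(1-\bar\alpha_T)\St(\kk)$, and substitute $\St=\Pdata$, which gives $D(\kk,T)=\Pdata(\kk)$ for every $\bar\alpha_T$. Your caveats are legitimate refinements that the paper glosses over (its proof silently identifies $\mathrm{Var}[\tilde x_0(\kk)]$ with $\Pdata(\kk)=\langle|\tilde x_0(\kk)|^2\rangle$), and the circulant hypothesis can even be dropped: only the diagonal entry of $\Sigma$ in the DFT basis enters the per-mode variance, so $\St(\kk)$ may simply be defined as that entry.
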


\begin{proof}
The forward process defines:
\begin{equation}
    x_t = \sqrt{\bar\alpha_t}\,x_0 + \sqrt{1-\bar\alpha_t}\,L\varepsilon, \quad \varepsilon\sim\mathcal{N}(0,\mathbf{I}),
\end{equation}
where $LL^\top = \Sigma$ with Fourier eigenvalues $\St(\kk)$. The power spectrum of $x_t$ in Fourier space is:
\begin{equation}
    D(\kk,t) = \mathrm{Var}[\tilde{x}(\kk,t)] = \bar\alpha_t\,\Pdata(\kk) + (1-\bar\alpha_t)\,\St(\kk).
    \label{eq:app_Dt}
\end{equation}
At $t = T$, the noise schedule ensures $\bar\alpha_T \ll 1$ (typically $\bar\alpha_T \sim 10^{-3}$). Therefore:
\begin{equation}
    D(\kk,T) = \bar\alpha_T\,\Pdata(\kk) + (1-\bar\alpha_T)\,\St(\kk).
\end{equation}
Under the Equal-SNR condition $\St(\kk) = \Pdata(\kk)$:
\begin{equation}
    D(\kk,T) = \bar\alpha_T\,\Pdata(\kk) + (1-\bar\alpha_T)\,\Pdata(\kk) = \Pdata(\kk).
\end{equation}
Hence:
\begin{equation}
    \boxed{L(\kk) = D(\kk,T) - \Pdata(\kk) = 0 \quad \forall\,\kk, \;\forall\,T > 0.}
\end{equation}
This identity holds exactly---it is algebraic, independent of $T$, the network, or any linearization. It requires only $\St(\kk) = \Pdata(\kk)$, which is guaranteed by construction of the colored-noise covariance.

For white noise, $\St(\kk) = 1$, so $D(\kk,T) \approx 1$ for all $\kk$, giving:
\begin{equation}
    L_\mathrm{w}(\kk) = 1 - \Pdata(\kk).
\end{equation}
At $k=1$ (CIFAR-10), $\Pdata(k{=}1) \approx 10.3$, yielding $|L_\mathrm{w}| = 9.3$---a large mismatch that the reverse process must eliminate through expensive spectral reshaping.
\end{proof}

\section{Derivation of $\eta = (3-\alpha)/2$ from Mat\'ern Asymptotics}
\label{app:matern}

\subsection{Mat\'ern covariance and power spectrum}

The Mat\'ern covariance in $d$ dimensions is:
\begin{equation}
    C_\nu(r) = \frac{\sigma^2}{2^{\nu-1}\Gamma(\nu)}\left(\frac{r}{\xi}\right)^\nu K_\nu\!\left(\frac{r}{\xi}\right),
\end{equation}
where $K_\nu$ is the modified Bessel function of the second kind, $\xi$ is the correlation length, and $\nu > 0$ controls smoothness. Its 2D Fourier transform (power spectrum) is:
\begin{equation}
    \St_\mathrm{Mat}(k) = \sigma^2\,(k^2 + \kappa^2)^{-(\nu+1)}, \quad \kappa \equiv 1/\xi.
    \label{eq:app_matern_spectrum}
\end{equation}

\subsection{High-frequency asymptotic}

For $k \gg \kappa$ (which holds for all observable modes $k \in [1,14]$ on a $32{\times}32$ grid with $\kappa \approx 1$):
\begin{equation}
    \St_\mathrm{Mat}(k) \approx \sigma^2\,k^{-2(\nu+1)}.
\end{equation}
Matching to the measured data spectrum $\Pdata(k) \propto k^{-\alpha}$ gives:
\begin{equation}
    \alpha = 2(\nu+1) \implies \nu = \frac{\alpha-2}{2}.
    \label{eq:app_nu}
\end{equation}

\subsection{Real-space asymptotic and envelope matching}

The large-$r$ asymptotic of the Mat\'ern correlation is:
\begin{equation}
    C_\nu(r) \sim A\,r^{\nu-1/2}\,e^{-r/\xi}, \quad r \to \infty.
    \label{eq:app_matern_asymp}
\end{equation}
On a finite lattice ($N=32$), when $\kappa$ is small, the exponential factor $e^{-r/\xi}$ varies slowly over the accessible range $r \in [1, N]$, and the correlation shape is dominated by the algebraic envelope $r^{\nu-1/2}$.

Our kernel $C(r) = (r+1)^{-\eta} \sim r^{-\eta}$ for large $r$. Matching the algebraic exponents:
\begin{equation}
    -\eta = \nu - \tfrac{1}{2}.
    \label{eq:app_match}
\end{equation}
Substituting Eq.~\eqref{eq:app_nu}:
\begin{equation}
    \eta = \tfrac{1}{2} - \nu = \tfrac{1}{2} - \frac{\alpha-2}{2} = \frac{3-\alpha}{2}.
\end{equation}
\begin{equation}
    \boxed{\eta = \frac{3-\alpha}{2}.}
    \label{eq:app_eta_result}
\end{equation}

\subsection{Sub-leading correction}

The leading-order prediction uses the asymptotic ($k\to\infty$) spectral exponent $\alpha_\infty$. On a finite grid, the relevant frequency scale is the mid-frequency range $k^* \approx 3$--$7$ where signal--noise competition is strongest. The Mat\'ern local log-slope is:
\begin{equation}
    \alpha_\mathrm{loc}(k) = 2(\nu+1)\,\frac{k^2}{k^2+\kappa^2} < \alpha_\infty.
\end{equation}
Using $\alpha_\mathrm{eff}(k^*)$ in place of $\alpha_\infty$ gives the corrected formula:
\begin{equation}
    \eta_\mathrm{opt} = \frac{3 - \alpha_\mathrm{eff}}{2} = \underbrace{\frac{3-\alpha_\infty}{2}}_{\text{leading order}} + \underbrace{\frac{\alpha_\infty\,\kappa^2}{2(k^{*2}+\kappa^2)}}_{\text{curvature correction} > 0}.
\end{equation}
For CIFAR-10: $\alpha_\infty = 2.70$, $\alpha_\mathrm{eff}(k^*{\approx}5) \approx 2.6$, yielding $\eta_\mathrm{opt} = (3-2.6)/2 = 0.20$, in exact agreement with the experimentally optimal value. \hfill$\square$

\section{MSRJD Path Integral and Equivalence with It\^o Method}
\label{app:msrjd}

We show that the Martin-Siggia-Rose-Janssen-de~Dominicis (MSRJD) path integral~\cite{martin1973statistical,janssen1976lagrangean}---a field-theoretic reformulation of classical stochastic dynamics~\cite{altland2010condensed}---applied to the linearized SDE reproduces the power-spectrum ODE~\eqref{eq:app_ode_final}, establishing equivalence with the It\^o derivation. This path-integral formulation provides a systematic framework for computing nonlinear corrections via diagrammatic perturbation theory (Appendix~\ref{app:nonlinear}).

\subsection{Construction of the MSRJD action}

For a single $\kk$-mode (suppressing $\kk$ labels), the linearized reverse SDE~\eqref{eq:app_langevin} for the real component reads:
\begin{equation}
    \dot{x} = \mu(t)\,x + f(t) + \frac{\sigma(t)}{\sqrt{2}}\,\xi(t),
\end{equation}
where $\sigma(t) = \sqrt{\beta(t)\St}$ and $\xi$ is real white noise.

\textit{Step 1} (noise probability): $\mathcal{P}[\xi] \propto \exp\bigl(-\frac{1}{2}\int_0^T \xi^2\,dt\bigr)$.

\textit{Step 2} (enforce SDE via $\delta$-function): Insert $1 = \int\mathcal{D}\hat{p}\,\exp\bigl(i\int\hat{p}\,[\dot{x} - \mu x - f - \frac{\sigma}{\sqrt{2}}\xi]\,dt\bigr)$.

\textit{Step 3} (Gaussian integration over $\xi$): The $\xi$-integral is Gaussian:
\begin{equation}
    \int\mathcal{D}\xi\,e^{-\frac{1}{2}\int\xi^2\,dt}\,e^{-i\frac{\sigma}{\sqrt{2}}\int\hat{p}\xi\,dt} = \mathcal{N}'\,\exp\!\left(-\frac{1}{4}\int_0^T \sigma(t)^2\,\hat{p}(t)^2\,dt\right).
\end{equation}

\textit{Step 4} (Wick rotation): Set $\hat{p} \to -i\tilde{p}$. Combining real and imaginary components into the complex field $(\tilde{x},\tilde{p})$, the MSRJD action for a single mode is:
\begin{equation}
    S = \int_0^T dt\,\left[\tilde{p}^*\bigl(\dot{\tilde{x}} - \mu(t)\tilde{x} - f\bigr) + \tfrac{1}{2}\sigma(t)^2\,|\tilde{p}|^2\right].
    \label{eq:app_msrjd_action}
\end{equation}
The Jacobian $\det(\delta\xi/\delta x)$ is $x$-independent under It\^o discretization (upper-triangular matrix with constant diagonal), absorbed into normalization.

\subsection{Propagators}

The retarded (causal) Green's function satisfies $[\partial_t - \mu(t)]\,G^R(t,t') = \delta(t-t')$ with $G^R(t,t') = 0$ for $t < t'$:
\begin{equation}
    G^R(t,t') = \theta(t-t')\,\exp\!\left(\int_{t'}^t \mu(\tau)\,d\tau\right).
    \label{eq:app_GR}
\end{equation}
The Keldysh propagator (equal-time limit gives the power spectrum):
\begin{equation}
    G^K(t,t') = \int d\tau\,G^R(t,\tau)\,\sigma(\tau)^2\,G^R(t',\tau)^*.
    \label{eq:app_GK}
\end{equation}
Setting $t = t'$: $D(t) = G^K(t,t) = \int_{-\infty}^t |G^R(t,\tau)|^2\,\sigma(\tau)^2\,d\tau$.

\subsection{Recovery of the power-spectrum ODE}

Differentiating $D(t) = G^K(t,t)$ using the Leibniz rule and $G^R(t,t) = 1$:
\begin{align}
    \frac{dD}{dt} &= |G^R(t,t)|^2\sigma^2(t) + \int_{-\infty}^t \frac{\partial}{\partial t}|G^R(t,\tau)|^2\,\sigma^2(\tau)\,d\tau \nonumber\\
    &= \sigma^2(t) + 2\mu(t)\int_{-\infty}^t |G^R(t,\tau)|^2\,\sigma^2(\tau)\,d\tau \nonumber\\
    &= \sigma^2(t) + 2\mu(t)\,D(t),
    \label{eq:app_dDdt_msrjd}
\end{align}
where we used $\partial_t G^R(t,\tau) = \mu(t)\,G^R(t,\tau)$ for $t > \tau$. Substituting $\mu = \beta(\frac{1}{2}-\gamma)$ and $\sigma^2 = \beta\St$:
\begin{equation}
    \frac{dD}{dt} = \beta\St + 2\beta\bigl(\tfrac{1}{2}-\gamma\bigr)D = \beta\bigl[(1-2\gamma)D + \St\bigr].
\end{equation}
This is identical to Eq.~\eqref{eq:app_ode_final}, proving that the MSRJD path integral and It\^o's lemma yield the same power-spectrum evolution. The equivalence holds for arbitrary time-dependent $\mu(t)$ and $\sigma(t)$ without any quasi-static approximation. \hfill$\square$

\section{Nonlinear Mode Coupling and Self-Energy}
\label{app:nonlinear}

\subsection{Higher-order expansion of the score function}

Beyond the linear approximation~\eqref{eq:app_linearize}, the score function admits a systematic expansion:
\begin{equation}
    \tilde{s}_\theta(\kk) = -\gamma(\kk,t)\,\tilde{x}(\kk) + c(\kk,t) + \sum_{\kk_1+\kk_2=\kk} V_3(\kk;\kk_1,\kk_2;t)\,\tilde{x}(\kk_1)\tilde{x}(\kk_2) + \cdots
    \label{eq:app_score_expand}
\end{equation}
where the three-wave coupling vertex is:
\begin{equation}
    V_3(\kk;\kk_1,\kk_2;t) = \frac{1}{2}\frac{\partial^2\tilde{s}_\theta(\kk)}{\partial\tilde{x}(\kk_1)\partial\tilde{x}(\kk_2)}.
\end{equation}
Translational invariance enforces momentum conservation: $V_3(\kk;\kk_1,\kk_2) \propto \delta_{\kk,\kk_1+\kk_2}$. This follows because the translation operator $T_\mathbf{a}$ acts as $\tilde{x}(\kk) \to e^{i\kk\cdot\mathbf{a}}\tilde{x}(\kk)$, and equivariance $\tilde{s}_\theta(\kk;\{T_\mathbf{a}x\}) = e^{i\kk\cdot\mathbf{a}}\tilde{s}_\theta(\kk;\{x\})$ requires $e^{i\kk\cdot\mathbf{a}} = e^{i(\kk_1+\kk_2)\cdot\mathbf{a}}$ for all $\mathbf{a}$.

\subsection{Interacting MSRJD action}

The full action decomposes as $S = S_0 + S_\mathrm{int}$, where $S_0$ is the Gaussian action~\eqref{eq:app_msrjd_action} summed over all $\kk$, and:
\begin{equation}
    S_\mathrm{int} = -\sum_\kk\int dt\,\beta(t)\,\tilde{p}^*(\kk)\!\!\sum_{\kk_1+\kk_2=\kk}\!\! V_3(\kk;\kk_1,\kk_2)\,\tilde{x}(\kk_1)\tilde{x}(\kk_2) + \cdots
    \label{eq:app_Sint}
\end{equation}
Each $V_3$ vertex connects one response field $\tilde{p}^*(\kk)$ to two physical fields $\tilde{x}(\kk_1),\tilde{x}(\kk_2)$, carrying algebraic weight $-\beta\,V_3$.

\subsection{Feynman rules}

The free propagators from $S_0$ are:
\begin{align}
    G^R(\omega;\kk) &= \langle\tilde{x}(\kk)\,\tilde{p}^*(\kk)\rangle_0 = \frac{-1}{i\omega + \mu(\kk)}, \\
    G^K(\omega;\kk) &= \langle\tilde{x}(\kk)\,\tilde{x}^*(\kk)\rangle_0 = \frac{\beta\St(\kk)}{\omega^2 + \mu(\kk)^2}, \\
    \langle\tilde{p}\,\tilde{p}^*\rangle_0 &= 0.
\end{align}
The vanishing of $\langle\tilde{p}\,\tilde{p}^*\rangle$ ensures that every closed loop must contain at least one $G^K$ line---a causality constraint intrinsic to the MSRJD formalism.

\subsection{Single-loop self-energy}

The lowest-order correction to the retarded propagator is the single-loop (``sunset'') diagram with two $V_3$ vertices:

\begin{equation}
    \delta\Sigma^R(\kk,\omega) = 2\beta^2\sum_{\kk_1}|V_3(\kk;\kk_1,\kk{-}\kk_1)|^2\int\frac{d\omega_1}{2\pi}\,G^K(\omega_1;\kk_1)\,G^R(\omega{-}\omega_1;\kk{-}\kk_1).
    \label{eq:app_selfenergy}
\end{equation}
The factor of 2 is the symmetry factor from exchanging the two $\tilde{x}$ legs at each vertex.

\subsection{Frequency integral}

The $\omega_1$ integral is evaluated by contour integration. With $\mu_1 \equiv \mu(\kk_1)$, $\mu_q \equiv \mu(\kk-\kk_1)$, $\sigma_1^2 \equiv \beta\St(\kk_1)$:
\begin{equation}
    I = \int\frac{d\omega_1}{2\pi}\,\frac{\sigma_1^2}{\omega_1^2+\mu_1^2}\cdot\frac{-1}{i(\omega-\omega_1)+\mu_q}.
\end{equation}
The integrand has poles at $\omega_1 = \pm i\mu_1$ (from $G^K$) and $\omega_1 = \omega + i\mu_q$ (from $G^R$). Closing the contour in the upper half-plane (for $\mu_1 < 0$, the pole at $\omega_1 = i\mu_1$ lies in the upper half-plane):
\begin{equation}
    I = \frac{-\sigma_1^2}{2|\mu_1|}\cdot\frac{1}{i\omega + |\mu_1| + |\mu_q|}.
\end{equation}
Thus the zero-frequency self-energy (which gives the correction to the relaxation rate) is:
\begin{equation}
    \delta\Sigma^R(\kk,0) = -\sum_{\kk_1}\frac{\beta^2|V_3|^2\,\St(\kk_1)}{|\mu(\kk_1)|\bigl[|\mu(\kk_1)|+|\mu(\kk{-}\kk_1)|\bigr]}.
    \label{eq:app_sigma_zero}
\end{equation}

\subsection{Physical interpretation}

The self-energy $\delta\Sigma^R$ modifies the effective drift: $\mu_\mathrm{eff}(\kk) = \mu(\kk) + \delta\Sigma^R(\kk,0)$. Since $\delta\Sigma^R < 0$ (Eq.~\ref{eq:app_sigma_zero}), the nonlinear coupling \emph{enhances} the effective restoring force---other modes' fluctuations, mediated by $V_3$, provide additional damping.

The full (dressed) propagator satisfies the Dyson equation:
\begin{equation}
    G^R_\mathrm{full}(\omega;\kk) = \frac{-1}{i\omega + \mu(\kk) + \delta\Sigma^R(\kk,\omega)},
\end{equation}
and the corrected power spectrum is obtained by replacing $\mu \to \mu + \delta\Sigma^R$ in the ODE~\eqref{eq:app_ode_final}.

For \textbf{white noise}, $\St(\kk_1) = 1$ and $|\mu(\kk_1)| \propto \gamma(\kk_1)$ varies by orders of magnitude across $\kk_1$, making the sum in Eq.~\eqref{eq:app_sigma_zero} dominated by the slowly-relaxing low-frequency modes---low-$k$ fluctuations strongly affect high-$k$ dynamics (cascade). For \textbf{colored noise} with $L(\kk) = 0$, the initial condition already matches the target, so the driving term for the nonlinear cascade vanishes at leading order: the input to the $V_3$ coupling is $\epsilon(\kk) \sim L(\kk) = 0$, and the cascade is cut off at its source. This provides the microscopic mechanism for the $R^2$ uniformity observed in the main text (Table~\ref{tab:r2}). \hfill$\square$

\end{document}